\spnewtheorem*{proofv}{Proof of Theorem 1}{\itshape}{\rmfamily}
\mathchardef\mhyphen="2D
\DeclareMathOperator*{\argmin}{arg\,min}
\newcommand{\bigO}{\mathcal{O}}
\newcommand{\np}{\ensuremath{\mathbf{NP}}\xspace}
\newcommand{\kkm}{{\tt Kernel $k$-means}\xspace}
\newcommand{\nnk}{{\ensuremath c}\xspace}
\newcommand{\nncost}{{\ensuremath \rm c\mhyphen NNC}\xspace}
\newcommand{\nnck}{{\ensuremath \rm NNC}\xspace}
\newcommand{\nmi}{{\ensuremath \rm{NMI}}\xspace}
\newcommand{\vlbs}{{\sc MKNN}\xspace}
\newcommand{\gridsearch}{{\sc GridSearch}\xspace}
\newcommand{\ourmethod}{{\sc OURS}\xspace}
\newcommand{\binarysearch}{{\sc BinarySearch}\xspace}
\newcommand{\winedata}{{\sc Wine}\xspace}
\newcommand{\auditdata}{{\sc Audit}\xspace}
\newcommand{\wifidata}{{\sc WiFi}\xspace}
\newcommand{\dermatologydata}{{\sc Derma.}\xspace}
\newcommand{\wdbcdata}{{\sc WDBC}\xspace}
\newcommand{\aggregationdata}{{\sc Aggr.}\xspace}
\newcommand{\jaindata}{{\sc Jain}\xspace}
\newcommand{\pathbaseddata}{{\sc Pathbased}\xspace}
\newcommand{\ddata}{{\sc D31}\xspace}
\newcommand{\rdata}{{\sc R15}\xspace}
\newcommand{\flamedata}{{\sc Flame}\xspace}
\newcommand{\spiraldata}{{\sc Spiral}\xspace}
\newcommand{\compounddata}{{\sc Compound}\xspace}
\newcommand{\mnistdata}{{\sc MNIST}\xspace}
\newcommand{\profileimgw}{.3\textwidth}
\newcommand{\timesimgw}{.3\textwidth}
\newcommand{\squishlist}{
 \begin{list}{$\bullet$}
  {  \setlength{\itemsep}{0pt}
     \setlength{\parsep}{3pt}
     \setlength{\topsep}{3pt}
     \setlength{\partopsep}{0pt}
     \setlength{\leftmargin}{2em}
     \setlength{\labelwidth}{1.5em}
     \setlength{\labelsep}{0.5em}
} }
\newcommand{\squishlisttight}{
 \begin{list}{$\bullet$}
  { \setlength{\itemsep}{0pt}
    \setlength{\parsep}{0pt}
    \setlength{\topsep}{0pt}
    \setlength{\partopsep}{0pt}
    \setlength{\leftmargin}{2em}
    \setlength{\labelwidth}{1.5em}
    \setlength{\labelsep}{0.5em}
} }
\newcommand{\squishdesc}{
 \begin{list}{}
  {  \setlength{\itemsep}{0pt}
     \setlength{\parsep}{3pt}
     \setlength{\topsep}{3pt}
     \setlength{\partopsep}{0pt}
     \setlength{\leftmargin}{1em}
     \setlength{\labelwidth}{1.5em}
     \setlength{\labelsep}{0.5em}
} }
\newcommand{\squishend}{
  \end{list}
}
\newcommand\numberthis{\addtocounter{equation}{1}\tag{\theequation}}
\begin{document}
\title{Off-the-grid: Fast and Effective Hyperparameter Search for Kernel Clustering\thanks{This work was supported by 
the Academy of Finland project 317085.}}
%
%

\toctitle{Off-the-grid: Fast and Effective Hyperparameter Search for Kernel Clustering}

\tocauthor{Bruno~Ordozgoiti and Llu\'is~Belanche~Mu\~noz}

\author{Bruno Ordozgoiti\inst{1} {\Letter} \and
Llu\'is A. Belanche Mu\~noz\inst{2}}

\authorrunning{B. Ordozgoiti\inst{1} \Letter \and
L. A. Belanche Mu\~noz}
%
\institute{Aalto University, Finland
\email{<firstname>.<lastname>@aalto.fi}\\
 \and
Universitat Polit\`ecnica de Catalunya, Spain\\
\email{belanche@cs.upc.edu} }
\maketitle              

\setcounter{footnote}{0}
\begin{abstract}
Kernel functions are a powerful tool to enhance the $k$-means
clustering algorithm via the kernel trick. It is known that the
parameters of the chosen kernel function can have a dramatic impact on
the result. In supervised settings, these can be tuned via
cross-validation, but for clustering this is not straightforward and
heuristics are usually employed. In this paper we study the impact of kernel parameters on kernel
$k$-means. In particular, we derive a lower bound, tight up to
constant factors, below which the parameter of the RBF kernel will
render kernel $k$-means meaningless. We argue that grid search can be
ineffective for hyperparameter search in this context and propose an
alternative algorithm for this purpose. In addition, we offer an
efficient implementation based on fast approximate exponentiation with provable
quality guarantees. Our experimental results demonstrate the ability
of our method to efficiently reveal a rich and useful set of hyperparameter values. 
\keywords{clustering  \and kernels \and kernel k-means \and hyperparameter tuning \and grid search.}
\end{abstract}
\section{Introduction}
Clustering, the task of partitioning a given data set into groups of
similar items, is one of the central topics in data analysis. Among the
plethora of existing techniques for this purpose, $k$-means
clustering, along with Lloyd's algorithm \cite{lloyd1982least}, is one
of the most popular and well-understood methods. Despite its
popularity, $k$-means has significant limitations, as it implicitly
makes strong assumptions about the shapes of the clusters. 
Numerous alternative methods have been proposed to tackle challenges beyond the capabilities of $k$-means
\cite{ester1996density,moon1996expectation,ng2002spectral,jain2010data}.
 
One of these involves the use of positive definite
kernels \cite{hofmann2008kernel}, which enable the computation of
inner products between elements of a vector space after mapping them
to a different, high-dimensional space. In particular, kernels enhance
the capabilities of $k$-means by enabling the detection of clusters
of arbitrary shapes.


One drawback of kernel functions is that they usually involve
hand-set parameters, which must be fine-tuned to bring forth their
full potential. A common method to choose a value for these parameters is grid search. One considers a set of values and then evaluates the
performance of the algorithm for each of them. A drawback is that one might either choose too small a set and risk missing optimal values, or an overly big one, incurring excessive ---and possibly redundant--- computational costs. Another way to set these values is by heuristics and rules of thumb \cite{von2007tutorial,jaakkola1999using}, but these rarely apply to a wide variety of data. 

Our contribution in this paper is two-fold. First, we illustrate the impact of kernel parameters in clustering by deriving a lower bound for the bandwidth parameter of RBF kernels (section \ref{sec:bound}), below which \kkm will be rendered
useless. We show this bound is tight. Next, we propose a method for hyperparameter search. Our method specifically searches for values that will produce different clusterings, and thus, unlike grid search, does not risk carrying out redundant computations, so no processing time is wasted. We combine methods for fast exponentiation with the properties of dyadic rationals to design an algorithm that after $\bigO \left ( \log\left ( \frac{|\log(b)|}{\epsilon} \right ) \right )$ iterations ---where $b$ is the minimum entry in the kernel matrix--- provides a $(1\pm\epsilon)$-approximation of the next meaningful hyperparmeter value to inspect (sections \ref{sec:search} and  \ref{sec:exponentiation}). We validate our claims with a rich variety of experiments (section \ref{sec:experiments}).

\section{Related work}
Kernels have been a central subfield of machine learning since their
first use in conjunction with support vector machines \cite{cortes1995support}. Even though most efforts have
focused on their application to supervised learning methods,
they have also played a significant part in the development of
clustering techniques \cite{ben2001support,ng2002spectral,dhillon2004kernel}.
%
%
In the seminal work by Ben-Hur et al. \cite{ben2001support}, the authors suggest to inspect
the results using varying values of $\sigma$, starting from the
maximizer of the pairwise squared distances $\|x-y\|^2$ over all pairs
of data points. A good choice might lie within a region that yields
stable clusterings. It should be noted that stability has been shown
to have significant drawbacks for choosing the number of
clusters \cite{ben2006sober}, so it would be interesting to determine
whether this applies to the kernel bandwidth as well. 
In the work that introduced spectral clustering \cite{ng2002spectral},
Ng et al. rely on a result of their own that guarantees that their algorithm
will produce tight clusters if they exist in the data. They then propose to test 
various values of $\sigma$ in search for a clustering with this property.
In \cite{bach2004learning} a generalized form of the bandwidth parameter is learned based
on data with known clustering.
In \cite{zelnik2005self} a different value of $\sigma$ is computed for
each point. The approach proposed by the authors relies on the distance to the $k$-th
neighbor. 
In \cite{greene2006practical}, the authors investigate the problem of
kernel matrix diagonal dominance in clustering, which is essentially a
generalization of the problem we analyze in the beginning of section \ref{sec:bound}. The heuristics
they explore to alleviate the problem either require the selection of a new
hyperparameter, or heavily modify the structure of the problem. The
latter can even lead to the loss of positive-definiteness of the
kernel matrix, which results in algorithmic oscillations and failure
to converge. 
The mean distance to the $k$-th nearest neighbour is also suggested as
a heuristic by Von Luxburg \cite{von2007tutorial}.
%

\section{Preliminaries}
We consider a finite set of data points $X\subset \mathbb R^d$. We define a
$k$-partition of $X$ as a collection of $k$ non-empty subsets of $X$, $\pi_1,
\dots, \pi_k$, satisfying $\bigcup_{i=1}^k\pi_i=X$ and
$\pi_i\cap\pi_j=\emptyset$ for $i,j=1,\dots,k$, $i\neq j$. We will
refer to each $\pi_i$ as a \textit{cluster} and use $n_i=|\pi_i|$ to
denote its cardinality.

The \textit{$k$-means} objective is to find a $k$-partition of
$X$ so as to minimize
\begin{equation}
  \label{kmeans}
\sum_{i=1}^k\sum_{x\in\pi_i}\|x-\bar\pi_i\|^2,
\end{equation}
where $\bar\pi_i=n_i^{-1}\sum_{x\in \pi_i}x$ is the
\textit{centroid} of cluster $\pi_i$ and $\|x\|$ denotes the
$L_2$ norm in $\mathbb R^d$.
Optimizing this objective is known to be \np-hard for $k=2$
\cite{aloise2009np}. A popular heuristic is Lloyd's
algorithm \cite{lloyd1982least}, which repeatedly recomputes the  
centroid of each cluster and reassigns points to the closest centroid.

 \subsubsection{Kernels:}
 Given a non-empty set $\mathcal{X}$, a symmetric function
 \(\kappa:\mathcal{X} \times \mathcal{X} \rightarrow \mathbb{R}\) such
 that for all \( n \in \mathbb{N}\) and every set
 \(\{x_i\}_{i=1}^n \subset \mathcal{X}\), the matrix
 $K=\left(\kappa(x_i,x_j)\right )_{ij}$ is positive definite, is called
 a (strictly) \textbf{positive definite (PD) kernel}. The matrix $K$ is
 known as the \textit{Gram} matrix or \textit{Kernel} matrix.
 Since PD kernels give rise to a PD Gram matrix,
 they correspond to the computation of an inner product in some
 implicit inner-product space. The representation of an object 
 $x\in \mathcal X$ in said space is often called \textit{feature space
 representation}, denoted as $\phi(x)$.

A number of kernels are routinely used in practice. Probably the most popular one for the case $\mathcal{X} =\mathbb{R}^d$ is
the Gaussian RBF kernel
\[
\kappa(x,y) = \exp\left(\frac{-\|x -y\|^2}{\sigma}\right),
\]
$\sigma>0$, (from here on, RBF kernel). The parameter $\sigma$
is commonly referred to as \textit{bandwidth}. We will use $\kappa_\sigma$ to denote the RBF kernel function with bandwidth parameter $\sigma$ and $K_\sigma$ to denote the corresponding kernel matrix.


Consider a data set $X$ and the $k$-partition
$\pi_1, \dots, \pi_k$. Let $m_i$ denote the centroid of cluster $\pi_i$
in feature space, that is,
\[
m_i=\frac{1}{n_i} \sum_{x\in\pi_i}\phi(x).
\]

The application of kernels to the $k$-means objective (\ref{kmeans}) relies on the
following observation: even though we cannot in general express $m_i$
explicitly, it is possible to compute the necessary squared distances. For any $x\in X$ and $i=1,\dots, k$,
\begin{align}
  \label{fs_distance}
    \|\phi(x)-m_i\|^2 =  \kappa(x,x) - \frac{2\sum_{y\in \pi_i}\kappa(x,y)}{n_i} + \frac{\sum_{y,z\in \pi_i}\kappa(y,z)}{n_i^2}.
\end{align}

 The application of Lloyd's algorithm using this expression for the
 squared distance is known as \kkm. See \cite{dhillon2004kernel} for
 an insightful analysis. \kkm always converges when the kernel matrix is positive semidefinite. We will refer to the $k$-partition at convergence as the \textit{output} of \kkm.

\section{The use of the RBF kernel in Kernel $k$-means}
\label{sec:bound}
RBF kernels are powerful but sensitive to the
bandwidth parameter. In particular, for sufficiently small $\sigma$, a
support vector machine classifier can fit any training set with no
errors ---or equivalently, it has infinite VC dimension
\cite{vapnik2006estimation}---, but this will generally result in
poor generalization ability. In \kkm, the result of an overly small
bandwidth will be that the algorithm will converge in the first
iteration, regardless of the current $k$-partition.
The reason is that as $\sigma$ decreases, the value
of $\kappa(x,y)$ for any two distinct points $x,y\in X$ decreases as well,
to the point of becoming negligible. Therefore, the only significant
term in equation (\ref{fs_distance}) for any $x$ will be $\kappa(x,x)$, which
means that the closest cluster to $x$ will be the one it is currently in. 
A question arises naturally: how small does
$\sigma$ have to be for the algorithm to get stuck at the initial
clustering? The following theorem provides a lower bound, which is
tight up to constant factors.

\smallskip
\begin{theorem}
  \label{the:bound}
Consider a data set $X \subset \mathbb R^d$, $|X|=n$. Let
$x,y=\argmin_{x,y \in X}\|x-y\|^2$. If $\sigma  \leq
\left(\log (3n)\right)^{-1}\|x-y\|^2$, then \kkm will make no
cluster reassignments.
\end{theorem}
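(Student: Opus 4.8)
The plan is to show that under the stated bound on $\sigma$, for every point $x\in X$ currently assigned to cluster $\pi_i$, the feature-space distance $\|\phi(x)-m_i\|^2$ to its own centroid is strictly smaller than the distance $\|\phi(x)-m_j\|^2$ to every other centroid $m_j$, $j\neq i$. Since \kkm reassigns $x$ only if some other centroid is strictly closer (and keeps $x$ in $\pi_i$ on ties by convention), this suffices to conclude that no reassignments are made. The key is to control the cross terms and self-interaction terms in equation (\ref{fs_distance}): with $\sigma$ tiny, $\kappa_\sigma(x,y)=\exp(-\|x-y\|^2/\sigma)$ is uniformly small for distinct points because $\|x-y\|^2 \geq \|x_{\min}-y_{\min}\|^2 =: \delta^2$, where $x_{\min},y_{\min}$ realize the minimum pairwise squared distance.

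First I would fix $x\in\pi_i$ and expand both distances using (\ref{fs_distance}). Note $\kappa_\sigma(x,x)=1$ appears in both and cancels in the comparison, so it suffices to show
\begin{equation*}
-\frac{2\sum_{y\in\pi_i}\kappa_\sigma(x,y)}{n_i} + \frac{\sum_{y,z\in\pi_i}\kappa_\sigma(y,z)}{n_i^2}
\;<\;
-\frac{2\sum_{y\in\pi_j}\kappa_\sigma(x,y)}{n_j} + \frac{\sum_{y,z\in\pi_j}\kappa_\sigma(y,z)}{n_j^2}.
\end{equation*}
For the left-hand side I would isolate the $y=x$ summand, which contributes $-2/n_i$ from the first term and $1/n_i^2$ from the double sum (the pair $(x,x)$), giving a ``main'' contribution of $-2/n_i + 1/n_i^2 \leq -1/n_i \leq -1/n$ once $n_i\geq 1$; all remaining summands on the left involve $\kappa_\sigma$ of \emph{distinct} points and are hence at most $e^{-\delta^2/\sigma}$ in absolute value, and there are fewer than $n^2$ of them, so they perturb the left-hand side by at most $n^2 e^{-\delta^2/\sigma}$ in absolute value (after dividing by the $n_i$, $n_i^2$ denominators, which only helps). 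The right-hand side consists entirely of $\kappa_\sigma$ evaluated at distinct points (since $x\notin\pi_j$), so its absolute value is likewise at most a small multiple of $n^2 e^{-\delta^2/\sigma}$.

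Combining these bounds, a sufficient condition for the strict inequality is roughly $1/n > C n^2 e^{-\delta^2/\sigma}$ for an absolute constant $C$ absorbing the counting of terms; solving for $\sigma$ gives $\sigma < \delta^2/\log(C n^3)$, and the constant $3$ in $\log(3n)$ together with the exponent is exactly what makes $\log(3n)$ dominate the relevant $\log(Cn^3)$-type quantity (one checks $(3n)^{?}$ against $n^3$ with the precise constants; since $\delta^2/\sigma \geq \log(3n)$ we get $e^{-\delta^2/\sigma}\leq (3n)^{-1}$, and then $n^2\cdot(3n)^{-1} = n/3$, which must be beaten by the $1/n$ gap — so the careful bookkeeping of how many terms there really are, and how the $n_i,n_j$ denominators help, is what pins down the constant). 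The main obstacle, therefore, is not any single inequality but the \emph{accounting}: correctly counting the distinct-point summands on each side, tracking which terms are helped by the cluster-size denominators, and verifying that the constant $3$ (rather than a larger constant) genuinely suffices; a clean way to do this is to bound $|\kappa_\sigma(\text{distinct})| \leq (3n)^{-1}$ and then show the total perturbation on both sides together is strictly less than the $1/n$ margin coming from the $y=x$ term.
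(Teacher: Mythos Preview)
Your overall strategy is the paper's: for $x\in\pi_i$, show its own centroid is (weakly) closest in feature space. But two things are wrong or missing.

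First, your diagonal accounting is incorrect. You isolate only the pair $(x,x)$ from the double sum $\sum_{y,z\in\pi_i}\kappa(y,z)$, contributing $1/n_i^2$, and then assert that ``all remaining summands on the left involve $\kappa_\sigma$ of distinct points.'' This overlooks the pairs $(y,y)$ with $y\neq x$, each equal to $\kappa(y,y)=1$. The same oversight recurs on the right: the double sum over $\pi_j$ has a full diagonal worth $n_j/n_j^2=1/n_j$, so it is false that ``the right-hand side consists entirely of $\kappa_\sigma$ evaluated at distinct points.'' (Once fixed, the main terms become $-1/n_i$ on the left and $+1/n_j$ on the right, which only helps you.)

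Second, you explicitly leave the constant unverified: your crude count gives a perturbation of order $n^2\cdot(3n)^{-1}=n/3$, which cannot be beaten by a $1/n$ gap, and you stop at ``careful bookkeeping is needed.'' The paper supplies exactly this bookkeeping via two moves you do not make. It uses the \emph{nonnegativity} of the RBF kernel to simply discard the negative perturbation terms rather than bound them in absolute value, and it bounds each surviving positive sum against its own cluster-size denominator: with $\omega=\max_{u\neq v}\kappa(u,v)$ one has $\tfrac{2}{n_j}\sum_{y\in\pi_j}\kappa(x,y)\leq 2\omega$ (since $x\notin\pi_j$) and $\tfrac{1}{n_i^2}\sum_{y\neq z\in\pi_i}\kappa(y,z)\leq\omega$. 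After extracting the diagonal $1/n_i$ from the own-cluster double sum this yields $2/n_i\geq 3\omega+1/n_i$, i.e.\ $1/n_i\geq 3\omega$, hence $1/n\geq 3\omega$, which is exactly $\sigma\leq\|x_{\min}-y_{\min}\|^2/\log(3n)$. Without this combination of sign-dropping and per-sum normalization, the constant $3$ cannot be recovered from your argument.
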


The proof is given in the supplementary material.
%
\paragraph{A tight example.}
The next example shows that this result is tight up to constant factors.
Consider an instance with two clusters, $\pi_1$ and $\pi_2$,
containing $n_1$ and $n_2$ points respectively. For some point $y\in \pi_2$ it is
$\|x-y\|_2^2=\min_{a,b}\|a-b\|_2^2=\epsilon$ for all $x\in\pi_1$, whereas for all $z\in \pi_2, z\neq y$, it is
$\|y-z\|_2^2=2\epsilon$. Moreover, for all $w,z\in\pi_1$
it is $\|w-z\|_2^2=\epsilon$ and
for all $w,z\in\pi_2, w,z\neq y$,
it is $\|w-z\|_2^2=\epsilon$. Define $n=n_1+n_2$ and consider
$\sigma=\epsilon/\log(n/3)$. We know $y$ will switch over to
$\pi_1$ if $\|\phi(y)-m_1\|_2^2 < \|\phi(y)-m_2\|_2^2$, or
equivalently,
 \begin{multline}  
\frac{2}{n_2} <~  \frac{2\sum_{x\in \pi_1}\kappa(y,x)}{n_1} - \frac{\sum_{w,z\in \pi_1}\kappa(w,z)}{n_1^2} - \frac{2\sum_{z\in \pi_2, z\neq y}\kappa(y,z)}{n_2} + \frac{\sum_{w,z\in \pi_2}\kappa(w,z)}{n_2^2}
 \\ = 6/n - 1/n_1 - \frac{3(n_1-1)}{nn_1} - \frac{2(n_2-1)}{n_2}\left(\frac{3}{n}\right)^2 \\+ 1/n_2 + \frac{3(n_2-1)(n_2-2)}{nn_2^2} + \frac{(n_2-1)}{n_2^2}\left(\frac{3}{n}\right)^2.
\end{multline}
The above inequality is verified when $n_1=n_2$ and $n$ is sufficiently large. That is, there exists a
family of instances where the kernel $k$-means algorithm with the RBF
kernel will make cluster reassignments with $\sigma=\Omega\left(
\frac{\|x-y\|_2^2}{\log(n)}\right )$, where $\|x-y\|_2^2$ is minimal over all
$x,y$ in the data set.

\section{Optimizing bandwidth}
\label{sec:search}
As demonstrated above, the choice of bandwidth parameter is crucial when using RBF kernels for clustering. For some choices of $\sigma$, the output of \kkm will be unchanged from the initial $k$-partition. In fact, for any value of $\sigma$ the algorithm will converge at some point ---provided that the kernel matrix is positive semidefinite--- and stop making changes. However, if the chosen value is inadequate the output might still be of poor quality, so it is often desirable to further refine $\sigma$ in order to obtain a better result. We already know, by virtue of Theorem \ref{the:bound}, a value of $\sigma$ such that \kkm will stop making changes. The following question arises naturally. \textit{How big does $\sigma$ have to be in order to guarantee that \kkm will change the initial $k$-partition?}, and more generally, \textit{once \kkm has converged, how much do we have to increase $\sigma$ to ensure it will make new changes?} We define this as the \textit{critical bandwidth value}.

\begin{definition}{(Critical bandwidth value)}
  Let $X$ a data set. Suppose \kkm outputs a $k$-partition $P=(\pi_1, \dots, \pi_k)$ of $X$ when run using an RBF kernel with bandwidth parameter $\sigma$. We define $S \subset \mathbb R$ to be the set satisfying the following: if \kkm is initialized with $k$-partition $P$ and run with $K_{\sigma'}$, with $\sigma' \in S$, it will output a $k$-partition $P'\neq P$, that is, it will make changes. We define the critical bandwidth value with respect to $(K_\sigma,P)$ to be the infimum of $S$, or $\infty$ if $S=\emptyset$.
\end{definition}
In other words, the critical bandwidth value reveals the ``minimal'' value the RBF kernel bandwidth needs to take so that \kkm ``snaps out'' of convergence and yields a new $k$-partition. Any value strictly larger than the critical value will suffice. This concept is the cornerstone of our contribution.

\subsection{Finding the critical value}
Possibly the most straightforward method to find a value of $\sigma$ ---or virtually any hyperparameter--- is grid search. This consists in running the clustering algorithm for a predetermined set of values of the hyperparameter and choosing the one which provides the best performance, as measured by e.g. objective function values or clustering quality indices \cite{rousseeuw1987silhouettes}.
This approach, however, has significant disadvantages. If the set of values to test is too small, one can fail to detect one that yields good performance; if it is too large, running times can be prohibitive and some computations redundant.

Here we propose an alternative approach. Roughly, we proceed as follows. First, we choose a sufficiently small value of $\sigma$ ---e.g. guided by Theorem \ref{the:bound}--- and run \kkm. We then search for the critical bandwidth value with respect to the current kernel matrix and $k$-partition and rerun \kkm until convergence. We can keep doing this until no further changes are observed, to finally obtain a set of possible hyperparameter choices. The question that arises now is how to find said value efficiently. Next, we illustrate the fact that this value can be located using optimization methods.

\subsubsection{A first approach} Let $\kappa_\sigma$ denote the RBF kernel function parametrized by $\sigma$.
In a \kkm iteration, a point $x$ is assigned to the cluster $\pi_i$ which maximizes the \textit{proximity} function $\delta$:
\begin{align}
  \label{eq:proximity}
\delta_\sigma(x,m_i) = \frac{2\sum_{y\in \pi_i}\kappa_\sigma(x,y)}{n_i} - \frac{\sum_{y,z\in \pi_i}\kappa_\sigma(y,z)}{n_i^2}.
\end{align}

Now, observe that if we change the value of the bandwidth parameter to $\sigma'$, the new value of the kernel for any pair of points $x,y$ can be computed as follows:
\[
\kappa_{\sigma'}(x,y) = \kappa_{\sigma}(x,y)^{\sigma/{\sigma'}},
\]
and we can thus compute the new proximity functions $\delta_{\sigma'}(x,m_i)$ accordingly.
For simplicity, we consider the case of two clusters $\pi_1, \pi_2$. Assume $x \in \pi_1$. $x$ will switch over to $\pi_2$ when
\begin{align*}
   \delta_{\sigma'}(x,m_1) < \delta_{\sigma'}(x,m_2) \Leftrightarrow \delta_{\sigma'}(x,m_1) - \delta_{\sigma'}(x,m_2) < 0.
\end{align*}
That is, we can find the value of $\sigma'$ that will result in a different clustering by finding a root of $\delta_{\sigma'}(x,m_1) - \delta_{\sigma'}(x,m_2)$.

A useful observation is that $\kappa_\sigma(x,y)^\sigma$ is constant with respect to $\sigma'$. Therefore, we can easily derive $\delta_{\sigma'}(x,m_1) - \delta_{\sigma'}(x,m_2)$ with respect to $\sigma'$. In particular, define $g(x,\sigma') = \delta_{\sigma'}(x,m_1) - \delta_{\sigma'}(x,m_2)$. Then

\begin{multline}
  \frac{dg}{d\sigma'} = \frac{2\sum_{y\in \pi_2}\log\left(\kappa_\sigma(x,y)^{\sigma}\right)\kappa_\sigma(x,y)^{\sigma/{\sigma'}}}{\sigma'^2n_2} - \frac{\sum_{y,z\in \pi_2}\log\left(\kappa_\sigma(y,z)^{\sigma}\right)\kappa_\sigma(y,z)^{\sigma/{\sigma'}}}{\sigma'^2n_2^2}
  \\  -\frac{2\sum_{y\in \pi_1}\log\left(\kappa_\sigma(x,y)^{\sigma}\right)\kappa_\sigma(x,y)^{\sigma/{\sigma'}}}{\sigma'^2n_1} + \frac{\sum_{y,z\in \pi_1}\log\left(\kappa_\sigma(y,z)^{\sigma}\right)\kappa_\sigma(y,z)^{\sigma/{\sigma'}}}{\sigma'^2n_1^2}.
\end{multline}

This implies that we can use iterative root-finding algorithms, such as Newton's method, to efficiently find a root of the above function, that is, the minimum value of $\sigma'$ that will result in a clustering change, or the critical bandwith value. 

This approach, however, can be slow and numerically unstable. In the next section we propose an alternative optimization method able to efficiently locate the critical bandwidth value to arbitrary precision while overcoming these drawbacks.


\section{Fast and effective hyperparameter search}
\label{sec:exponentiation}
The approach outlined above has several drawbacks, namely (1) using an iterative root-finding algorithm entails repeatedly recomputing the kernel matrix, either directly or by element-wise exponentiation, which can be slow in practice when dealing with large matrices and (2) the operations required for the derivative of $g$ and the fractional computations can induce numerical instability.

Here we propose an alternative approach to sidestep these issues.
The proposed method rests on the following fact: \textit{computing products and square roots of real numbers can be much faster than computing powers with arbitrary exponents} \cite{gordon1998survey}. Our method has the additional advantage of being numerically stable.

\subsection{Dyadic rationals and fast approximate exponentiation}
To develop an efficient method for hyperparameter search, we first propose an algorithm for fast approximate exponentiation that only uses products and square roots. This algorithm (Algorithm \ref{alg:expo}) forms the basis of our approach.

\subsubsection{Exponentiation algorithm overview.} As hinted above, we wish to avoid computing element-wise powers of the kernel matrix, and instead use element-wise products and square roots. To accomplish this, suppose we want to compute the power $b^p$, for some arbitrary positive reals $b$ and $p$. We first decompose $p$ as $p=z+f$, where $z$ is the integral part and $f$ the decimal part of $p$. We then compute $b^z$ and approximate $b^f$ as $b^{f'}$ using two separate fast methods for integral and rational exponents and finally return $b^zb^{f'} \approx b^p$.

To design our algorithm, we rely on two simple results. First, we make use of the following recursive representation of a positive integer based on its binary representation, which has long been employed in the design of fast algorithms for power computation with integral exponents \cite{gordon1998survey}.
\begin{lemma}
  \label{lem:binaryrec}
  Consider a number $n \in \mathbb N$, and let $b_0\dots b_t$, where $t=\lfloor\log_2 n \rfloor$, be its binary representation, i.e. $n=\sum_{i=0}^t2^{t-i}b_i$.
  Then $n=n_t$, where
  \begin{align*}
    n_i=
    \begin{cases}
      1 &  \mbox{ if } i=0
      \\ 2n_{i-1} + b_i & \mbox{ if } 0<i\leq t
    \end{cases}
  \end{align*}
\end{lemma}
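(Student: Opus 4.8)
The plan is to prove Lemma~\ref{lem:binaryrec} by induction on $i$, showing that the recursive sequence $n_i$ satisfies $n_i=\sum_{j=0}^{i}2^{i-j}b_j$ for all $0\le i\le t$; the lemma then follows by setting $i=t$, since by definition of the binary representation $n=\sum_{j=0}^{t}2^{t-j}b_j=n_t$.

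First I would record the base case. For $i=0$ the claimed identity reads $n_0=\sum_{j=0}^{0}2^{0-j}b_j=b_0$. Since $b_0\dots b_t$ is the binary representation of $n$ with $t=\lfloor\log_2 n\rfloor$, the leading bit $b_0$ equals $1$ (this is exactly what it means for $t$ to be the position of the highest set bit), so $n_0=1=b_0$, matching the definition in the lemma.

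Next I would do the inductive step. Assume $n_{i-1}=\sum_{j=0}^{i-1}2^{i-1-j}b_j$ for some $1\le i\le t$. Then by the recursive definition,
\begin{align*}
n_i &= 2n_{i-1}+b_i
= 2\sum_{j=0}^{i-1}2^{i-1-j}b_j + b_i
= \sum_{j=0}^{i-1}2^{i-j}b_j + 2^{i-i}b_i
= \sum_{j=0}^{i}2^{i-j}b_j .
\end{align*}
This closes the induction, so the identity holds for all $0\le i\le t$, and in particular $n_t=\sum_{j=0}^{t}2^{t-j}b_j=n$, which is the statement of the lemma.

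There is essentially no hard part here: the result is the standard correctness invariant behind left-to-right binary exponentiation, and the only thing to be a little careful about is the base case, namely using the convention $t=\lfloor\log_2 n\rfloor$ to guarantee $b_0=1$ so that $n_0=1$ agrees with the recursion. Everything else is a one-line geometric bookkeeping step inside the induction.
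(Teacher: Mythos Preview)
Your induction argument is correct and is the standard verification of this identity; the paper itself states Lemma~\ref{lem:binaryrec} without proof (treating it as a well-known fact underlying fast exponentiation), so there is nothing further to compare against.
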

Lemma \ref{lem:binaryrec} reveals how to compute a power of the form $b^i$, where $b$ is a positive real number and $i$ is a natural number, using a small number of products. In particular, this operation is carried out in lines \ref{code:binpowerloop} and \ref{code:binpowerop} of Algorithm \ref{alg:expo}.

The next result we rely on is a consequence of the properties of dyadic rationals. Dyadic rationals are rational numbers of the form $n/2^i$, where $n$ is an integer and $i$ is a natural number. It is well known that dyadic rationals are dense in $\mathbb R$, that is, any real number can be approximated arbitrarily well by a dyadic rational.
The next result reveals how to obtain such an approximation for numbers in the interval $(0,1)$, which will be useful in our context.
\begin{lemma}
  \label{lem:power2approx}
Let $a\in (0,1)$. There exists a sequence $(m_i)$, with $m_i\in\{-1,1\}, i=1, \dots$ such that $\lim_{t\rightarrow \infty}\sum_{i=1}^{t}m_i2^{-i}=a$.
\end{lemma}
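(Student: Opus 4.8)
The plan is to construct the sequence $(m_i)$ greedily while tracking the partial sums $s_t = \sum_{i=1}^t m_i 2^{-i}$, and to show that these partial sums converge to $a$. The guiding observation is that, no matter how the signs $m_{t+1}, m_{t+2}, \dots$ are chosen afterwards, the tail $\sum_{i=t+1}^\infty m_i 2^{-i}$ always lies in the interval $[-2^{-t}, 2^{-t}]$, since $\sum_{i=t+1}^\infty 2^{-i} = 2^{-t}$. This suggests maintaining the invariant $|a - s_t| \le 2^{-t}$ for every $t \ge 0$: once this is established, the claim $s_t \to a$ is immediate because $2^{-t} \to 0$.

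I would prove the invariant by induction on $t$. For the base case $t = 0$ we have $s_0 = 0$ and $|a - s_0| = a < 1 = 2^0$, which holds precisely because $a \in (0,1)$. For the inductive step, assume $|a - s_{t-1}| \le 2^{-(t-1)} = 2\cdot 2^{-t}$, and set $m_t = 1$ if $a \ge s_{t-1}$ and $m_t = -1$ otherwise. In the first case $s_t = s_{t-1} + 2^{-t}$, so $a - s_t = (a - s_{t-1}) - 2^{-t}$; since $0 \le a - s_{t-1} \le 2\cdot 2^{-t}$, this difference lies in $[-2^{-t}, 2^{-t}]$. The case $a < s_{t-1}$ is symmetric: $s_t = s_{t-1} - 2^{-t}$, so $a - s_t = (a - s_{t-1}) + 2^{-t}$, and from $-2\cdot 2^{-t} \le a - s_{t-1} \le 0$ we again get $|a - s_t| \le 2^{-t}$. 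This closes the induction.

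From $|a - s_t| \le 2^{-t}$ for all $t$ and $2^{-t} \to 0$, I conclude $\lim_{t\to\infty} s_t = \lim_{t\to\infty}\sum_{i=1}^t m_i 2^{-i} = a$, which is exactly the statement of the lemma.

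The argument is entirely elementary, so I do not expect a genuine obstacle; the only points needing a little attention are (i) picking the sharp invariant $|a - s_{t-1}| \le 2^{-(t-1)}$ — a looser residual bound would fail to propagate through the step — and (ii) verifying that the boundary situations (equality in the inductive hypothesis, or $a = s_{t-1}$) are harmless, which they are because the endpoints $\pm 2^{-t}$ are included in the target interval, so the tie-breaking rule "choose $+1$ when $a \ge s_{t-1}$" still keeps us inside it.
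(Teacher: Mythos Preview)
Your proof is correct and follows essentially the same greedy construction as the paper: start from $s_0=0$, at each step add or subtract $2^{-t}$ depending on whether the current partial sum is below or above $a$, and maintain the invariant $|a-s_t|\le 2^{-t}$. If anything, your version is slightly more careful than the paper's, which handles the tie case $s_{j-1}=a$ by setting $m_j=0$ (technically outside the stated range $\{-1,1\}$), whereas your tie-breaking rule keeps $m_t\in\{-1,1\}$ throughout.
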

\begin{proof}
Let $m_1=1$. Choose the $j$-th term of $(m_i)$ (for $j>1$) to be $1$ if $\sum_{i=1}^{j-1}m_i2^{-i}<a$, $-1$ if $\sum_{i=1}^{j-1}m_i2^{-i}>a$, $0$ otherwise. Clearly, $\left | a-\sum_{i=1}^km_i2^{-i}\right| \leq 2^{-k}$.
\end{proof}
The set of dyadic rationals is clearly closed under addition, and thus the above series provides an approximation by means of a dyadic rational.

Now, suppose we want to approximately compute the power $b^p$, by an approximation of $p$ to within an error of $2^{-j}$. The above result implies that it suffices to compute $j$ operations, at each step either multiplying or dividing by successive square roots of $b$. This is done in lines \ref{code:ratpowerbegin} through \ref{code:ratpowerend} of Algorithm \ref{alg:expo}.
\begin{algorithm}[t]
  \caption{Fast approximate exponentiation}
  Input: base $b$, exponent $p$, depth $i$
  \begin{algorithmic}[1]
    \STATE $z \gets bin(\lfloor p \rfloor)[1:]$
    \STATE $f  \gets p - \lfloor p \rfloor$
    \STATE $b_1 \gets b$; $b_2 \gets b$
    \STATE $j\gets 1$
    \FOR {$d$ in $z$}  \label{code:binpowerloop}
    \STATE $b_1 \gets b_1^2b^d$  \label{code:binpowerop}
    \ENDFOR
    \STATE $n \gets 1$; $d \gets 2$
    \FOR {$j=1,\dots, i$} \label{code:ratpowerbegin}
    \STATE $b \gets \sqrt b$; $n \gets 2n$; $d \gets 2d$
    \IF {$n/d > f$}
    \STATE $n \gets n-1$; $b_2 \gets b_2/b$
    \ENDIF
    \IF{$n/d < f$}
    \STATE $n \gets n+1$; $b_2 \gets b_2b$
    \ENDIF
    \IF {$n/d = f$}
    \STATE $\j \gets i+1$ // Exact exponent matched, so exit loop \label{code:ratpowerend}
    \ENDIF
    \ENDFOR
    \STATE Output $b_1\times b_2$
  \end{algorithmic}
  \label{alg:expo}
\end{algorithm}
The following result characterizes the quality of the approximation achieved by Algorithm \ref{alg:expo}, and the required number of operations.
\begin{theorem}
\label{the:approxguarantee}
  Algorithm \ref{alg:expo} yields a $(1\pm\epsilon)-$approximation of $b^p$ after performing $\bigO \left ( \log\left ( \frac{|\log(b)|}{\epsilon} \right ) \right )$ operations.
\end{theorem}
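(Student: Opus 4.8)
The plan is to analyze the two independent sources of error in Algorithm~\ref{alg:expo} separately: the integral part $z$, which is handled exactly, and the fractional part $f$, which is approximated by a dyadic rational of depth $i$. First I would observe that the loop in lines~\ref{code:binpowerloop}--\ref{code:binpowerop} computes $b^z$ \emph{exactly} (by Lemma~\ref{lem:binaryrec}), so no error is incurred there; this step costs $\bigO(\log z) = \bigO(\log p)$ products. The interesting part is the fractional loop in lines~\ref{code:ratpowerbegin}--\ref{code:ratpowerend}. By Lemma~\ref{lem:power2approx}, after $i$ iterations the running dyadic rational $n/d$ satisfies $|f - n/d| \leq 2^{-i}$, and $b_2$ equals exactly $b^{n/d}$ since each step multiplies or divides by $\sqrt[2^j]{b}$. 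Hence the algorithm returns $b^z \cdot b^{n/d} = b^{z + n/d}$, whose ratio to the true value $b^p = b^{z+f}$ is $b^{(n/d) - f}$, an error exponent bounded in absolute value by $2^{-i}$.

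The second step is to convert this additive error in the exponent into a multiplicative $(1\pm\epsilon)$ error on the value. We have $b^{z+n/d} = b^p \cdot b^{\eta}$ with $|\eta| \leq 2^{-i}$, so I need $|b^{\eta} - 1| \leq \epsilon$. Writing $b^{\eta} = \exp(\eta \log b)$ and using the elementary bound $|e^t - 1| \leq 2|t|$ valid for $|t| \leq 1$, it suffices to have $2^{-i}|\log b| \leq 1$ and $2 \cdot 2^{-i}|\log b| \leq \epsilon$. The latter is the binding constraint: it is satisfied as soon as
\begin{equation*}
  i \geq \log_2\!\left(\frac{2|\log b|}{\epsilon}\right) = \bigO\!\left(\log\!\left(\frac{|\log b|}{\epsilon}\right)\right).
\end{equation*}
Since the fractional loop performs $\bigO(1)$ arithmetic operations (one square root, one comparison, one multiplication or division) per iteration and runs for at most $i$ iterations, the total operation count is $\bigO(\log p) + \bigO(i) = \bigO\!\left(\log\!\left(\frac{|\log b|}{\epsilon}\right)\right)$, where the $\log p$ term is absorbed because in our application $p$ is a ratio of bandwidths close to $1$, or more carefully can be taken as part of the input size; I would state the bound as dominated by the $i$ term.

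The main obstacle I anticipate is being careful about the relationship between the depth parameter $i$ passed to the algorithm and the target accuracy $\epsilon$: the theorem statement implicitly asserts that choosing $i = \Theta(\log(|\log b|/\epsilon))$ is both sufficient and necessary to reach accuracy $\epsilon$, so I must verify the direction that this many iterations genuinely suffices, and also handle the degenerate regimes ($b$ close to $1$, so $|\log b|$ small, or $b$ very small/large). A minor secondary subtlety is that when $b > 1$ one has $\log b > 0$ and the ``shrinking toward $1$'' intuition is reversed, but the bound $|b^\eta - 1| \leq \epsilon$ via $|\eta \log b| \leq \epsilon/2$ is symmetric in this respect, so the same computation covers both cases. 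I would also note that the early-exit at line~\ref{code:ratpowerend} only helps (it yields the exact answer), so it does not affect the worst-case bound.
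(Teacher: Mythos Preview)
Your proposal is correct and follows essentially the same route as the paper: both arguments bound the output as $b^{p\pm 2^{-i}}$ via Lemma~\ref{lem:power2approx} and then convert the additive exponent error $2^{-i}$ into a multiplicative $(1\pm\epsilon)$ error by solving $b^{1/2^i}-1=\epsilon$ (the paper does this directly; you go through the equivalent linearization $|e^t-1|\le 2|t|$). Your concern about the $\bigO(\log p)$ cost of the integer phase is legitimate---the paper simply asserts ``at most $2i$ multiplications in the first phase'' without justification, so you are in fact being more careful than the original on this point.
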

\begin{proof}
  First, note that the algorithm computes at most $2i$ multiplications in the first phase, and $i$ square roots or multiplications in the second.
  
  Assume $b>1$. We treat the alternative later. By lemma \ref{lem:power2approx}, the output of Algorithm \ref{alg:expo} is bounded as follows
  \[
  \frac{b^p}{b^{1/2^i}}=b^{p-1/2^i} \leq r \leq b^{p+1/2^i}=b^pb^{1/2^i}.
  \]
  Observe that $b^pb^{1/2^i}=b^p+b^p(b^{1/2^i}-1)$ and set $\epsilon=b^{1/2^i}-1$. We thus have $\frac{1}{2^i}=\frac{\log(1+\epsilon)}{\log(b)}$ and thus $i=\bigO\left(\log\left(\frac{\log(b)}{\epsilon}\right)\right)$. Similarly, we can write $b^pb^{-1/2^i}=b^p-b^p(1-b^{-1/2^i})$, arriving at an equivalent result for the $1-\epsilon$ bound.

  The analysis for the case $b<1$ is the same, but noting that the output is bounded as $b^{p+1/2^i}\leq r \leq b^{p-1/2^i}$. The negative sign of $\log(b)$ is cancelled out in the arithmetic. The case $b=1$ is obviously of no interest.
\qed
\end{proof}

%
%

Algorithm \ref{alg:expo} approximates a power computation by a dyadic rational approximation $w/z$ of the exponent.
Based on the principles behind Algorithm \ref{alg:expo} we can design an efficient method to find the critical value of $\sigma$ for \kkm.
\subsubsection{Finding the critical value.} Our algorithm for hyperparameter search is detailed as Algorithm \ref{alg:search}. In the pseudocode, $\circ$ and $/\circ$ denote element-wise multiplication and division, respectively, and $\sqrt K$ is the element-wise square root of matrix $K$.

In essence, our algorithm emulates Algorithm \ref{alg:expo}, using the kernel matrix $K_\sigma$ as the basis of the power to compute, with some key differences. The first difference is that instead of approximating a known exponent $p$, we aim to approximate the \textit{unknown} critical value of $\sigma$. Since this quantity is unknown, instead of testing whether the current approximation is larger or smaller than the target exponent, we query the \kkm algorithm to determine whether the current value will result in new changes. Note that this amounts to running a single iteration of \kkm. Later we show that we can further optimize these queries.

The second observation is that we only ever need to compute exponents in the interval $(0,1)$. This is because if we assume \kkm to have converged for the matrix $K_\sigma$, we know that the next value of $\sigma$ we seek is larger than the current one. Note that we can use our result from Theorem \ref{the:bound} for a starting value of $\sigma$ without running an initial execution of \kkm.

%
By virtue of Theorem \ref{the:approxguarantee}, Algorithm \ref{alg:search} thus finds an arbitrarily good approximation of the critical bandwidth value, in the following sense:
\begin{corollary}
  Suppose \kkm has converged for $K_\sigma$, producing a $k$-partition $P$, and let $\sigma'$ be the critical bandwidth value with respect to $(K_\sigma, P)$. If we run Algorithm \ref{alg:search} with a depth value of $i=\bigO \left ( \log\left ( \frac{|\log(b)|}{\epsilon} \right ) \right )$ ---where $b$ is the minimum entry in the kernel matrix---, it will output a matrix $K_\rho$ satisfying
  \[
  (1-\epsilon)K_{\sigma'} \leq_\circ K_\rho \leq_\circ (1+\epsilon)K_{\sigma'},
  \]
  where $\leq_\circ$ denotes element-wise inequality.
\end{corollary}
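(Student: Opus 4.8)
The plan is to reduce the corollary directly to Theorem~\ref{the:approxguarantee}. First I would observe that Algorithm~\ref{alg:search} is, by construction, a faithful simulation of Algorithm~\ref{alg:expo} applied entrywise to the matrix $K_\sigma$: the first (integral) phase does not apply, since the target exponent $\sigma/\sigma'$ lies in $(0,1)$ whenever $\sigma' > \sigma$, and we have noted that convergence of \kkm for $K_\sigma$ guarantees $\sigma' \geq \sigma$. Hence only the dyadic-rational phase (lines~\ref{code:ratpowerbegin}--\ref{code:ratpowerend}) is active, and after $i$ iterations the algorithm has produced a dyadic approximation $w/z$ of the true fractional exponent $f = \sigma/\sigma'$ with $|w/z - f| \leq 2^{-i}$, by Lemma~\ref{lem:power2approx}.

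Next I would argue that the comparison test inside the loop is implemented correctly by the \kkm query. In Algorithm~\ref{alg:expo} the branch is chosen according to whether the current dyadic partial sum $n/d$ exceeds or falls short of the known exponent $f$; here $f$ is unknown, but testing whether the candidate bandwidth $\rho_{\text{cur}}$ (corresponding to exponent $n/d$) induces a \kkm change is \emph{monotone} in the same direction: a larger bandwidth is ``past'' the critical value $\sigma'$ and triggers a change, a smaller one does not. This monotonicity is exactly the content of the critical-bandwidth definition ($S$ contains all sufficiently large values, with infimum $\sigma'$), and it lets the query stand in for the comparison $n/d \gtrless f$. I would make this correspondence explicit, so that the invariant ``$n/d$ approaches $f = \sigma/\sigma'$'' is maintained throughout.

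Then I would invoke Theorem~\ref{the:approxguarantee} with base $b$ equal to the minimum entry of $K_\sigma$ (the worst case, since the approximation error in $b^p$ is governed by $|\log b|$, which is largest for the smallest entry) and exponent $p = \sigma/\sigma'$. The theorem gives that each entry $K_\rho[x,y] = \kappa_\sigma(x,y)^{w/z}$ satisfies
\[
(1-\epsilon)\,\kappa_\sigma(x,y)^{\sigma/\sigma'} \;\leq\; \kappa_\sigma(x,y)^{w/z} \;\leq\; (1+\epsilon)\,\kappa_\sigma(x,y)^{\sigma/\sigma'}
\]
once $i = \bigO\!\left(\log\!\left(\tfrac{|\log b|}{\epsilon}\right)\right)$. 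Since $\kappa_\sigma(x,y)^{\sigma/\sigma'} = \kappa_{\sigma'}(x,y) = K_{\sigma'}[x,y]$ by the base-change identity $\kappa_{\sigma'}(x,y) = \kappa_\sigma(x,y)^{\sigma/\sigma'}$, these entrywise inequalities are precisely $(1-\epsilon)K_{\sigma'} \leq_\circ K_\rho \leq_\circ (1+\epsilon)K_{\sigma'}$, which is the claim.

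The main obstacle is the second paragraph: carefully establishing that the \kkm membership query is a valid substitute for the sign test against the unknown exponent, i.e.\ that ``$\rho_{\text{cur}}$ causes a change'' is equivalent to ``$n/d > \sigma/\sigma'$'' (so the exponent decreases toward $f$) and the reverse for the no-change case. This hinges on monotonicity of the change-triggering behaviour in $\sigma$, which follows from the critical-value definition but should be stated cleanly; everything else is bookkeeping on top of Theorem~\ref{the:approxguarantee} and the base-change formula, plus the remark that taking $b$ to be the smallest matrix entry yields the uniform bound over all entries.
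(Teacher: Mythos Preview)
Your approach is essentially the paper's: the corollary is stated immediately after the sentence ``By virtue of Theorem~\ref{the:approxguarantee}, Algorithm~\ref{alg:search} thus finds an arbitrarily good approximation of the critical bandwidth value, in the following sense,'' and no further proof is given. Your reduction to Theorem~\ref{the:approxguarantee} via the base-change identity $\kappa_{\sigma'}(x,y)=\kappa_\sigma(x,y)^{\sigma/\sigma'}$, together with taking $b$ to be the minimum matrix entry for the uniform bound, is exactly the intended argument; your explicit treatment of the monotonicity needed for the \kkm query to replace the sign test is more careful than the paper itself, which leaves that point implicit.
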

That is, it will output a good approximation of the ``next'' kernel matrix for which \kkm will make changes. Note that this result also characterizes the computational complexity of our approach, as element-wise operations take $\bigO(n^2)$ computations. In addition, element-wise operations are trivially parallelizable, so our method can scale to large kernel matrices. Finally, note that even though $\log(b)$ is unbounded, after a few iterations only very small entries, close to zero, would suffer considerable relative error.

An advantage of the algorithm is that we can choose the maximum value of the denominator in the rational approximation of the exponent (maximum depth $d$). This provides a nice trade-off between speed and accuracy.

\begin{algorithm}[t]
  \caption{Hyperparameter search}
  Input: kernel matrix $K$, depth $i$, $k$-partition $P$ of $X$.
  \begin{algorithmic}[1]
    \STATE $K' \gets K$
    \STATE $P' \gets P$
    \STATE $j\gets 1$
    \FOR {$i=1, \dots, i$}
    \STATE $K' \gets \sqrt {K'}$ //Element-wise square root
    \IF {$P' \neq P$} \label{algline:change}
    \STATE $K \gets K /\circ K'$
    \ELSE 
    \STATE $K \gets K \circ K'$
    \ENDIF
    \STATE $P' \gets kkm(K)$ //Run \kkm 
    \ENDFOR
    \STATE Output $K$
  \end{algorithmic}
  \label{alg:search}
\end{algorithm}

\subsection{Further optimizations}
Our approach lends itself naturally to various optimizations. We discuss them briefly here.

\paragraph{Hierarchical search.} Our algorithm enables a trade-off between running time and accuracy by means of the depth parameter. The larger it is, the more precise the critical values of $\sigma$ found. We argue that this parameter can be employed to improve speed without significantly sacrificing accuracy. In particular, the algorithm can be run with increasing depth values, constraining the search to promising regions. For instance, we first set depth to 1, run the algorithm and pick the two values of $\sigma$ that yield the best performance. We then increase the depth value by 1 and run the algorithm again, setting the lower and upper limits of our search to the two previously picked values of $\sigma$. This way we first perform a coarse-grained search to identify a potentially good interval for $\sigma$, and then increasingly refine the search.

\paragraph{Limiting checks.} As described above, the way our algorithm approximates the critical value of $\sigma$ is by testing whether or not \kkm will switch at least one point from one cluster to another. Often, most points will not switch clusters at the critical value. Thus, it is not necessary to compute the proximity function (Equation (\ref{eq:proximity})) for all point-cluster pairs, and we can limit checks to those points most likely to change. To do this, we can employ different heuristics. For instance, we can limit checks to points such that the proximity function is close for different clusters. We can also limit checks to those points that switch clusters the first time we observe a change (line \ref{algline:change} of Algorithm \ref{alg:search}).

\subsection{Use with other kernels}
Our approach is not limited to the RBF kernel. Obviously, any kernel that is exponential in the parameters can be directly used with our method. This includes the popular polynomial kernel, defined as $\kappa(x,y) = (x^Ty+c)^d$, for the optimization of the parameter $d$. We can also benefit from the fact that any linear combination of kernels is also a kernel, to accommodate a wider variety of kernel functions. To use our algorithm with a linear combination of differently-parametrized kernels, it suffices to store the kernel matrix separately for each term of the sum. As currently described, our method only allows the optimization of one parameter at a time, but it can be employed as a building block for more sophisticated multiparameter optimization approaches.

\section{Experiments}
\label{sec:experiments}
We conduct a series of numerical experiments to evaluate the performance of the proposed algorithm. We mainly want to determine whether our method (1) can reveal good value of $\sigma$ and (2) can do it efficiently.
We compare it to other approaches for hyperparameter search, which we now describe.
\paragraph{Baselines}
We consider the following methods to choose the hyperparameter of the RBF kernel\footnote{Some of these methods, as originally described, define the kernel as $\kappa(x,y)=\exp(-\|x-y\|/(2\sigma^2))$. We take this difference into account in our experimental setup.}.

\vlbs: We set $\sigma$ to be the mean distance to the $k$-th nearest neighbour as suggested by Von Luxburg \cite{von2007tutorial} (the median yields similar results). We try different values of $k$, namely $k=1, \dots 2(\log n +1)$.

\gridsearch: We run the \kkm algorithm with $\sigma$ taking values in $\{10^i: i=-6,-5,-4,-3,-2,-1,0,1,2,3,4,5,6\}$

We refer to Algorithm \ref{alg:search} as \ourmethod.

All methods, as well as \kkm, were implemented using Python 3, using matrix and vector operations whenever possible for efficiency\footnote{Source code: \url{https://github.com/justbruno/off-the-grid/}}.

\paragraph{Quality measures:} We consider the following functions to evaluate the quality of the clustering results.

\nmi (Normalized Mutual Information): We use a well-known clustering performance index\footnote{Results for Adjusted Rand-Index were similar and are thus omitted.}, which we now define. Given two indicator vectors $y$ and $z$, we define
\begin{align}
\label{eq:nmi}
\nmi(y,z) = \frac{2I(y,z)}{H(y)+H(z)}
\end{align}
where  $I(y,z)=\sum_i\sum_jp(y=i, z=j)\log\left( \frac{p(y=i, z=i)}{p(y=i)p(z=i)} \right)$ denotes the mutual information of $y$ and $z$,
and $H(y)=-\sum_ip(y=i)\log p(y=i)$ denotes the entropy of $y$ \cite{cover2012elements} (we abuse notation and overload $y$ for the vector and its entries). We use
this index by taking $y$ to be the indicator vector of
ground-truth labels and $z$ to be the indicator vector of the $k$-partition output by
\kkm.

$\nncost$: In addition, we propose our own clustering cost function. Our goal is to measure the quality of the resulting $k$-partition in a way that (1) arbitrarily shaped clusters are considered and (2) is independent of the value of $\sigma$. Note that some well-known clustering quality indices and cost functions, such as silhouette \cite{rousseeuw1987silhouettes} and normalized cuts \cite{dhillon2004kernel}, do not qualify.

We first introduce some notation. Given a data set $X$ and a point $x_i\in X$, let $\nu_j(x_i)$ be the $j$-th nearest neighbour of $x_i$ in $X$. Given a $k$-partition of the data set $X$ into $k$ clusters, $c(x_i)$ denotes the cluster $x_i$ is assigned to, i.e. $x_i\in c(x_i)$. 

We first define $\nnck(i,\nnk)$ to be the fraction of points among the $\nnk$ nearest neighbours of $x_i$ which are not in the same cluster as $x_i$.
\begin{align*}
\nnck(i,\nnk) = \frac{1}{\nnk}\sum_{j=1}^\nnk\mathbb I\{c(x_i) \neq c(\nu_j(x_i))\}.
\end{align*}

To measure the quality of a single cluster $\pi$, we take a weighted sum of the above index for all $\nnk$. We scale the value of $\nnck(i,\nnk)$ by $\frac{1}{\nnk}$ to reduce the penalty incurred by disagreements with further neighbours. 
\begin{align*}
\nnck_{cluster}(\pi) = \frac{1}{C\max\{1,|\pi|)\}}\sum_{i\in \pi}\sum_{\nnk=1}^n\frac{1}{\nnk}\nnck(i,\nnk).
\end{align*}
Here, $C=\log(n-1) + \gamma + \frac{1}{2n-2}$, where $\gamma$ is the Euler-Mascheroni constant, ensures that the quantity is upper-bounded by 1 (note that without this scaling factor, the sum for each point is tightly upper bounded by a harmonic series).

We now define the cost function as
\begin{align*}
  \nncost(P) = \frac{D+\sum_{\pi\in P}\nnck_{cluster}(\pi)}{k}.
\end{align*}
Here, $P$ is the $k$-partition output by \kkm, $k$ is the number of clusters given to \kkm and $D$ is the number of empty clusters. We count empty clusters to penalize trivial solutions (e.g. a single cluster).

\paragraph{Datasets}: We employ a variety of publicly available synthetic\footnote{\url{http://cs.joensuu.fi/sipu/datasets}} and real\footnote{\url{https://archive.ics.uci.edu/ml/index.php}} data sets. Since we use vanilla \kkm, which requires handling the complete kernel matrix, we employ data sets of limited size (up to $8\,000$ instances). However, our method can in principle be employed with techniques for scalable kernel-based algorithms. A summary of the data sets is given in Table \ref{tab:datasets}. In the case of real data sets, we scale the variables to unit-variance, as this enables a much better performance of \kkm in most cases.

\begin{table}
  \caption{Summary of data set characteristics}
  \label{tab:datasets}
  \begin{tabular}{c|r|r|r||c|r|r|r|}
    Dataset & Rows & Columns & Classes & Dataset & Rows & Columns & Classes  \\ \hline  \hline
    \aggregationdata & 788 & 2 & 7 &     \spiraldata & 312 & 2 & 3 \\ \hline
    \compounddata & 399 & 2 & 6 &     \auditdata & 775 & 23 & 2 \\ \hline
    \ddata & 3100 & 2 & 31 &     \dermatologydata & 358 & 34 & 6 \\ \hline
    \flamedata & 240 & 2 & 2 &     \wdbcdata & 569 & 30 & 2 \\ \hline
    \jaindata & 373 & 2 & 2 &     \wifidata & 2000 & 7 & 4 \\ \hline
    \pathbaseddata & 300 & 2 & 3 &     \winedata & 178 & 13 & 3 \\ \hline
    \rdata & 600 & 2 & 15 & \mnistdata (sampled) & 1k,2k,4k,8k&  784 & 10 \\ \hline
  \end{tabular}
\end{table}

\subsection{Performance}
In this section we report the performance of our method, as evaluated by our quality measures, in comparison to the selected baselines. We proceed as follows: we first choose a random initial $k$-partition, which we set as starting point for all methods.
To evaluate our method, we set the initial value of $\sigma$ to be the 1st percentile of pairwise distances in the data set. Note this is similar to our lower bound given in section \ref{sec:bound}, but a little less stringent. We run Algorithm \ref{alg:search} with depth=1 and pick the value of $\sigma$ that corresponds to the best observed $k$-partition (as measured by $\nncost$), run \kkm and rerun our method starting from the resulting $k$-partition with depth$=2$. Note that this resembles the hierarchical search described in section \ref{sec:exponentiation}.
For each method, we collect the best value of \nmi and $\nncost$ among the produced clusterings. We report the average over 50 runs, each with a different initial $k$-partition. Results are shown in Table \ref{tab:performance}. Our method achieves better values of both measures in most cases.
%

\begin{table}
  \caption{Comparison of the different methods in terms of quality measures}
  \label{tab:performance}
  \centering
  \begin{tabular}{c|r|r|r|r|r|r|}
    & \multicolumn{3}{c|}{\nmi} & \multicolumn{3}{c}{$\nncost$} \\ \hline
    Dataset & \vlbs & \gridsearch & \ourmethod & \vlbs & \gridsearch & \ourmethod  \\ \hline  \hline
    \aggregationdata & 0.690 & 0.864 & \textbf{0.872} & 0.255 & 0.210 & \textbf{0.203} \\ \hline
    \compounddata & 0.689 & \textbf{0.778} & 0.730 & 0.239 & 0.230 & \textbf{0.215} \\ \hline
    \ddata & 0.810 & 0.931 & \textbf{0.951} & 0.356 & 0.332 & \textbf{0.316} \\ \hline
    \flamedata & 0.489 & 0.521 & \textbf{0.615} & 0.106 & 0.096 & \textbf{0.093} \\ \hline
    \jaindata & 0.229 & \textbf{0.361} & 0.353 & 0.116 & 0.062 & \textbf{0.062} \\ \hline
    \pathbaseddata & 0.820 & 0.662 & \textbf{0.902} & 0.169 & \textbf{0.134} & 0.137 \\ \hline
    \rdata & 0.922 & 0.954 & \textbf{0.979} & 0.302 & 0.300 & \textbf{0.274} \\ \hline
    \spiraldata & 0.187 & 0.145 & \textbf{0.239} & 0.175 & 0.155 & \textbf{0.151} \\ \hline
    \auditdata & \textbf{0.717} & 0.685 & 0.703 & 0.097 & \textbf{0.082} & \textbf{0.082} \\ \hline
    \dermatologydata & 0.889 & 0.877 & \textbf{0.913} & 0.249 & 0.256 & \textbf{0.238} \\ \hline
    \wdbcdata & 0.531 & 0.547 & \textbf{0.550} & 0.123 & 0.108 & \textbf{0.107} \\ \hline
    \wifidata & 0.781 & 0.835 & \textbf{0.856} & 0.157 & 0.140 & \textbf{0.137} \\ \hline
    \winedata & \textbf{0.923} & 0.913 & \textbf{0.923} & 0.143 & \textbf{0.142} & 0.143 \\ \hline
  \end{tabular}
\end{table}

\subsection{Running times and Scalability}
In this section we evaluate the efficiency of our method. We report the average total running times in the previously described experiment for all algorithms in Table \ref{tab:times}. Our method generally sits between \gridsearch and \vlbs. It performs significantly more iterations than the baselines, and thus better running times could be obtained by limiting the number of inspected values if necessary.

\begin{table}
  \centering
  \caption{Total running times in seconds}
  \label{tab:times}
  \begin{tabular}{c|r|r|r||c|r|r|r|}
    & \multicolumn{3}{c|}{Time in seconds} & & \multicolumn{3}{c|}{Time in seconds}  \\ \hline
    Dataset & \vlbs & \gridsearch & \ourmethod & Dataset & \vlbs & \gridsearch & \ourmethod  \\ \hline  \hline
    \aggregationdata & 0.824 & \textbf{0.486} & 0.617 &    \spiraldata & 0.157 & \textbf{0.108} & 0.133 \\ \hline
    \compounddata & 0.200 & \textbf{0.146} & 0.172 &    \auditdata & 0.578 & \textbf{0.467} & 1.111 \\ \hline
    \ddata & 22.029 & 11.735 & \textbf{10.757} &    \dermatologydata & 0.152 & 0.106 & \textbf{0.100} \\ \hline
    \flamedata & 0.064 & \textbf{0.046} & 0.061 &    \wdbcdata & 0.325 & \textbf{0.204} & 0.213 \\ \hline
    \jaindata & 0.140 & \textbf{0.097} & 0.143 &    \wifidata & 7.095 & \textbf{3.520} & 4.402 \\ \hline
    \pathbaseddata & 0.110 & \textbf{0.077} & 0.094 &    \winedata & 0.044 & \textbf{0.034} & 0.036 \\ \hline
    \rdata & 0.475 & \textbf{0.329} & 0.406 & \multicolumn{4}{c|}{} \\ \hline
  \end{tabular}
\end{table}
To offer a finer running time comparison, as well as to evaluate scalability, we run the algorithms on samples of \mnistdata\footnote{\url{http://yann.lecun.com/exdb/mnist/}} and set the number of iterations to be the same for all methods. In particular, we set it to 13, which is the number of values tested by \gridsearch. Figure \ref{fig:time_per_iteration} shows time taken per iteration, averaged over 50 runs. By iteration we refer to the set of computations required to produce and test a new value of the bandwidth parameter. The reason the running time of \gridsearch increases significantly at some point is that the first values of $\sigma$ are too small and \kkm converges after one iteration, highlighting the wasteful nature of \gridsearch. Our method benefits mostly from being able to run a small number of iterations of \kkm to converge.
\begin{figure}
  \centering
  \includegraphics[width=\textwidth]{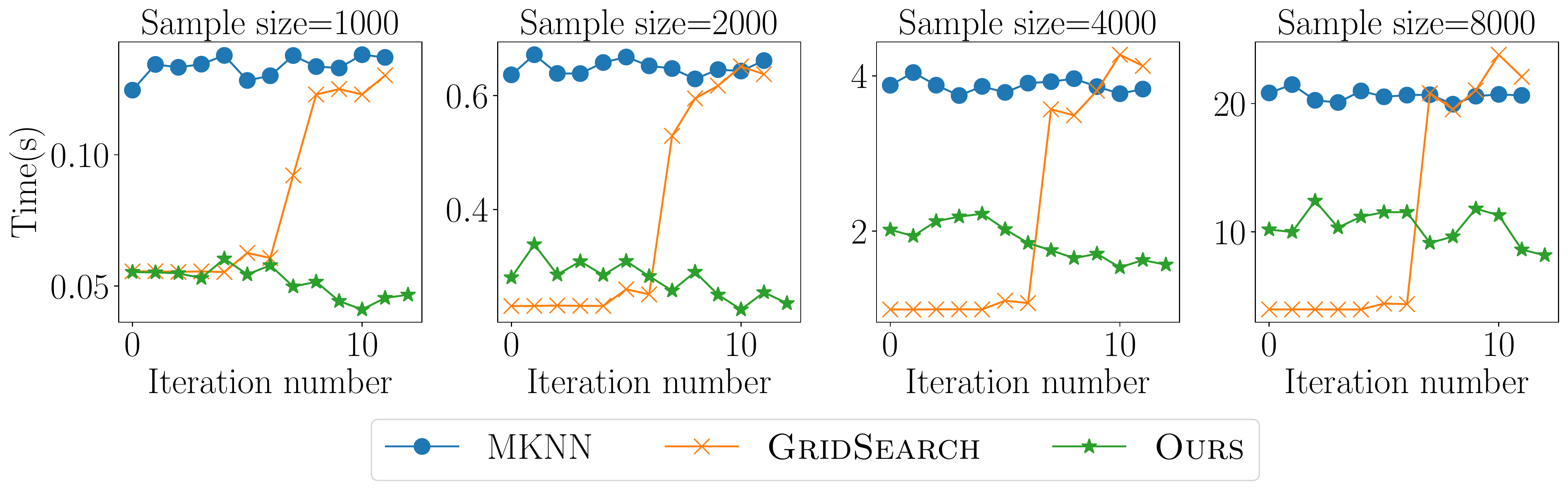}
  \caption{Running time per iteration for different samples of the \mnistdata data set}
  \label{fig:time_per_iteration}
\end{figure}

\subsection{Comparison with binary search}
The reader might observe that our method resembles a form of binary search. Thus, one might suspect that similar results could be obtained using a conventional binary search algorithm, without going to the trouble of implementing Algorithm \ref{alg:search}. Here we illustrate why our algorithm is a vastly superior alternative.

The setup is as follows: we initialize $\sigma$ to be the 1st percentile of the squared pairwise distances and then run iterations of binary search with a precision of $10^{-3}$ and Algorithm \ref{alg:search} with depth equal to $10$. We repeat the experiment $10$ times and report average iteration time and absolute error of the estimate of the critical value of $\sigma$. The results are shown in Table \ref{tab:search_comp}. Binary search was implemented efficiently, updating the kernel matrix with fast matrix-vector operations.

Our method achieves a speedup of about 10x in all cases, and the error is often smaller. Of course, the error can be controlled in both algorithms at the expense of running time. A noteworthy difference between both methods (not in favor of any of the two) is that binary search is designed to control absolute error, while Algorithm \ref{alg:search} controls the relative error of the power computation.
\begin{table}
  \centering
  \caption{Running times of our method and binary search. We report average iteration running times, speedup and mean relative error of the $\sigma$ estimate over 100 iterations}
  \label{tab:search_comp}
  \begin{tabular}{c|c|c|c|c|c|}
    & \multicolumn{2}{c|}{Iteration time in seconds} &Speedup & \multicolumn{2}{c|}{Relative error: $\frac{\sigma_{\rm{true}}-\sigma_{\rm{estimated}}}{\sigma_{\rm{true}}}$}  \\ \hline
    Dataset & \binarysearch & \ourmethod & - & \binarysearch & \ourmethod  \\ \hline  \hline
\aggregationdata & 0.941 & 0.080 &11.7x & $1.55\times 10^{-3}$  & $5.3\times 10^{-4}$ \\ \hline
\auditdata & 0.793 & 0.069 &11.5x & $8.341\times 10^{-2}$  & $5.8\times 10^{-4}$ \\ \hline
\compounddata & 0.192 & 0.019 &9.9x & $2.20\times 10^{-3}$  & $5.8\times 10^{-4}$ \\ \hline
\ddata & 15.740 & 1.148 &13.7x & $3.95\times 10^{-3}$  & $4.8\times 10^{-}$ \\ \hline
\dermatologydata & 0.139 & 0.014 &9.7x & $1.3\times 10^{-4}$  & $8.368\times 10^{-2}$ \\ \hline
\flamedata & 0.063 & 0.007 &9.1x & $5.1\times 10^{-3}$  & $5.6\times 10^{-4}$ \\ \hline
\jaindata & 0.144 & 0.014 &10.5x & $2.6\times 10^{-3}$  & $1.17\times 10^{-2}$ \\ \hline
\pathbaseddata & 0.096 & 0.010 &9.7x & $1.98\times 10^{-3}$  & $5.9\times 10^{-4}$ \\ \hline
\rdata & 0.430 & 0.039 &11.0x & $4.912\times 10^{-2}$  & $5.5\times 10^{-4}$ \\ \hline
\spiraldata & 0.102 & 0.011 &9.6x & $1.13\times 10^{-3}$  & $6.2\times 10^{-4}$ \\ \hline
\wdbcdata & 0.398 & 0.036 &11.2x & $10^{-6}$  & $6.2\times 10^{-4}$ \\ \hline
\wifidata & 5.284 & 0.442 &11.9x & $6\times 10^{-5}$  & $4.9\times 10^{-4}$ \\ \hline
\winedata & 0.042 & 0.005 &7.7x & $2\times 10^{-5}$  & $5.7\times 10^{-4}$ \\ \hline
  \end{tabular}
\end{table}

\section{Conclusion}
In this paper we have addressed the problem of hyperparameter search in the \kkm context. Our contribution is two-fold. First, we have derived a tight lower bound for the bandwidth parameter of RBF kernels, below which \kkm will be rendered useless. Second, we have proposed a method to optimize kernel hyperparameters for \kkm. We have proved that our method approximates critical values of the hyperparameter to arbitary precision in a small number of iterations. Unlike grid search or other heuristics, our method does not test redundant hyperparameter values, that is, values that result in the same clustering output, and thus no computation is wasted.

Our experiments demonstrate how our approach enables the efficient evaluation of a fine variety of hyperparameter values, revealing high-quality clustering results at a moderate computational cost. In the future it would be interesting to extend our method to other kernel-based clustering and classification algorithms.

 \bibliographystyle{splncs04}

%

\section*{Appendix}
\begin{proofv} Consider the squared
  distance as written in Eq. (\ref{fs_distance}). For the choice of cluster,
  we can drop the constant term $\kappa(x,x)$ and compute  
  \begin{align}
    \label{eq:argmin}    
    \argmin_j ~ - \frac{2\sum_{y\in \pi_j}\kappa(x,y)}{n_j} + \frac{\sum_{y,z\in \pi_j}\kappa(y,z)}{n_j^2}.
  \end{align}
  If $x\in \pi_j$ we can write  
  \begin{align}
    \label{eq:argmin_in}
    & - \frac{2\sum_{y\in \pi_j}\kappa(x,y)}{n_j} + \frac{\sum_{y,z\in \pi_j}\kappa(y,z)}{n_j^2} \\
    =& - \frac{2\kappa(x,x)}{n_j} - \frac{2\sum_{y\in \pi_j, y\neq x}\kappa(x,y)}{n_j} + \frac{\sum_{y,z\in \pi_j}\kappa(y,z)}{n_j^2}.
  \end{align}
If $x \in \pi_j$ and $\|\phi(x)-m_j\|_2^2\leq \|\phi(x)-m_i\|_2^2$
for all $i\neq j$, then $a$ will remain in the same
cluster. Considering (\ref{eq:argmin}) and (\ref{eq:argmin_in}), we can write this condition as
 \begin{align*}  
 \frac{2\kappa(x,x)}{n_j} \geq & \frac{2\sum_{y\in \pi_i}\kappa(x,y)}{n_i} - \frac{\sum_{y,z\in \pi_i}\kappa(y,z)}{n_i^2} \\
& - \frac{2\sum_{y\in \pi_j, y\neq x}\kappa(x,y)}{n_j} + \frac{\sum_{y,z\in \pi_j}\kappa(y,z)}{n_j^2}.
\end{align*}

Since $\kappa(x,y) \geq 0$ for any pair of points $x,y$, we can drop
the negative terms on the right-hand side to obtain the following,
more restrictive, condition: 
\begin{align*}
  &\frac{2\kappa(x,x)}{n_j} \geq  \frac{2\sum_{y\in \pi_i}\kappa(x,y)}{n_i} + \frac{\sum_{y,z\in \pi_j,y\neq z}\kappa(y,z)+n_j}{n_j^2} \\
  &= \frac{2n_j^2\sum_{y\in \pi_i}\kappa(x,y) + n_i\left ( \sum_{y,z\in \pi_j,y\neq z}\kappa(y,z) + n_j\right )}{n_in^2_j}.
  \numberthis\label{eq:ineq_more_rest}
\end{align*}
Here we have used $\sum_{x\in \pi_j}\kappa(x,x)=n_j$.
If we define $\omega = \max_{x\neq y}\kappa(x,y)$, then the two following inequalities hold:
\begin{align*}  
  n_i\omega \geq \sum_{y\in \pi_i}\kappa(x,y), &&
   n_j^2\omega \geq \sum_{y,z \in \pi_j,y\neq z}\kappa(y,z)
\end{align*}  
We can thus consider the following, more restrictive, condition (recall that $\kappa(x,x)=1$):
\begin{align}
  \label{eq:last_ineq_nj}
  \frac{2\kappa(x,x)}{n_j} & \geq \frac{2n_j^2n_i\omega + n_i\left
    (n_j^2\omega + n_j\right )}{n_in^2_j} = 3\omega + \frac{1}{n_j} \Leftrightarrow \frac{1}{n_j}  \geq 3\omega.
\end{align}

Trivially, $\frac{1}{n}  \geq 3\omega \Rightarrow
\frac{1}{n_j}  \geq 3\omega$. Now, after minor computational efforts,
it is
\begin{align}
  \label{eq:last_ineq} 
  \frac{1}{n}  \geq 3\omega   
  \Leftrightarrow -\log n  \geq \log 3 - \frac{\|x-y\|_2^2}{\sigma} \Leftrightarrow \frac{\|x-y\|_2^2}{\log 3n} \geq \sigma,
\end{align}
where $x,y=\argmin_{x.y\in X}\|x-y\|_2^2$.
Therefore, we have (\ref{eq:last_ineq}) $\Rightarrow$
(\ref{eq:last_ineq_nj}) $\Rightarrow$ (\ref{eq:ineq_more_rest})
$\Rightarrow$ $\|\phi(x)-m_j\|_2^2\leq \|\phi(x)-m_i\|_2^2$ for all
$i\neq j$. Since $x$ is an arbitrary element of $X$, (\ref{eq:last_ineq}) is 
a sufficient condition for kernel $k$-means to make no changes. 
%
\qed
\end{proofv}\

\end{document}